\documentclass[letterpaper, 10 pt, journal, twoside]{IEEEtran}

\IEEEoverridecommandlockouts                              

\usepackage{amsthm}
\newtheorem{theorem}{Theorem}

\newtheorem{assumption}{Assumption}
\newtheorem{remark}{Remark}
\newtheorem{definition}{Definition}

\usepackage{algorithm}
\usepackage{algpseudocode}
\usepackage{amsmath,amssymb,amsfonts}
\usepackage{bm}
\usepackage{booktabs}
\usepackage{cite}
\usepackage{color}
\usepackage{comment}
\usepackage{fancyhdr}
\usepackage{float}
\usepackage{gensymb}
\usepackage{graphicx}
\usepackage[hidelinks]{hyperref} 
\usepackage{mathrsfs}
\usepackage{soul}
\usepackage{textcomp}
\usepackage{times}
\usepackage[dvipsnames]{xcolor} 
\usepackage{url}

\DeclareMathOperator*{\argmax}{argmax}

\newcommand{\boldtheta}{\boldsymbol{\theta}}
\newcommand{\R}{\mathbb{R}}

{\vspace{-\topsep}\begin{itemize}\itemsep1pt \parskip0pt \parsep1pt}
{\end{itemize}\vspace{-\topsep}}

{\vspace{-\topsep}\begin{enumerate}\itemsep1pt \parskip0pt \parsep1pt}
{\end{enumerate}\vspace{-\topsep}}

\begin{document}

\title{Design Optimization for Large High-Force Soft Robot Manipulators Under Gravitational Loads}


\markboth{}
{Cholaseuk \MakeLowercase{\textit{et al.}}: Design Optimization for Soft Manipulators} 

\author{Isara Cholaseuk$^{1}$, 
Penelope Llibre$^{1}$, 
Alexa Kyriacou$^{1}$, 
Audrey Wang$^{1}$, 
Akua K. Dickson$^{2}$, \\
Ran Jing$^{1}$, 
Juan C. Pacheco Garcia$^{1}$, 
Andrew P. Sabelhaus$^{1,2}$
\thanks{$^{1}$I. Cholaseuk, P. Llibre, A. Kyriacou, A. Wang, R. Jing, J.C. Pacheco Garcia, and A.P. Sabelhaus are with the Department of Mechanical Engineering, Boston University, Boston MA, USA
        {\tt\footnotesize \{isara, pllibre, alexakyr, axwang, rjing, jcp29, asabelha\}@bu.edu}}%
\thanks{$^{2} $A.K. Dickson and A.P. Sabelhaus are with the Division of Systems Engineering, Boston University, Boston MA, USA
        {\tt\footnotesize akuad@bu.edu}}%
\thanks{Digital Object Identifier (DOI): see top of this page.}
}

\maketitle

\pagenumbering{arabic}

\begin{abstract}

Designing large soft robots capable of generating high forces for physical human-robot interaction remains a significant challenge in soft robotics. 
Prior work in large soft robots has focused on proof-of-concept prototypes, and no systematic framework exists for determining the suitability of a design paradigm for a desired task. 
This manuscript introduces a method for optimizing the geometry of a soft robot limb, maximizing its blocking force subject to an anti-bucking constraint under its own gravitational loading. 
We demonstrate that an explicit solution exists to the proposed optimization problem under certain assumptions.
Experiments with three geometries of a large, soft, pneumatically-actuated manipulator demonstrate that the method correctly predicts which designs meet constraints and which produces the largest end-effector forces.
This method, with its closed-form solution, can allow designers to determine a-priori if an intended class of soft manipulators is an appropriate choice for physical interaction at large size scales.

\end{abstract}

\begin{IEEEkeywords}
Soft Robot Materials and Design, Mechanism Design, Soft Sensors and Actuators
\end{IEEEkeywords}

\section{Introduction}
\label{sec:intro}

\IEEEPARstart{H}{ow} large can soft robots be?
Though there is much interest in soft robot manipulators for human-scale tasks \cite{chen_review_2022} such as healthcare assistance \cite{manti2016soft,cianchetti2018biomedical,ilievski2011soft,fathi_deployable_2019}, no formal methodology exists to determine their appropriateness at larger sizes.
State-of-the-art commonly uses softness only at the end effector \cite{shintake_soft_2018,turco_grasp_2021}, or incorporates rigid or stiff materials \cite{sanders_dynamically_2023,jensen_tractable_2022,truby_recipe_2021,lipton_modular_2019}, or deploys under loading conditions that do not require the robot to support its own weight \cite{guan2020novel,marchese2016design,peng2019dimension,Calisti_Poseidrone_2015, Cianchetti_OCTOPUS_2015}.
Other large soft manipulators are either bulky \cite{johnson2021using,good2025torque,Li_modular_2022,gillespie_simultaneous_2016,best_control_2015} or move very slowly \cite{he2019electrically}.
Is it realistic to develop robots using only soft materials that meet our engineering goals at the human scale?

This paper addresses the question by a method for optimizing the design of a soft manipulator.
The approach considers choosing some desired geometric dimension to meet constraints on buckling under gravity, while also maximizing applied forces at the robot's end effector.
We demonstrate that, under certain conditions and assumptions, an analytical solution exists for the optimal geometry.
Experiments with three large robot prototypes (Fig. \ref{fig:teaser}) demonstrate that our method correctly predicts both buckling behavior and optimal-force designs.
The approach could allow practitioners to determine if a fully-soft manipulator is a viable choice for a human-scale robotic interaction problem.

\begin{figure}[!t]
    \centering
    \includegraphics[width=1\linewidth]{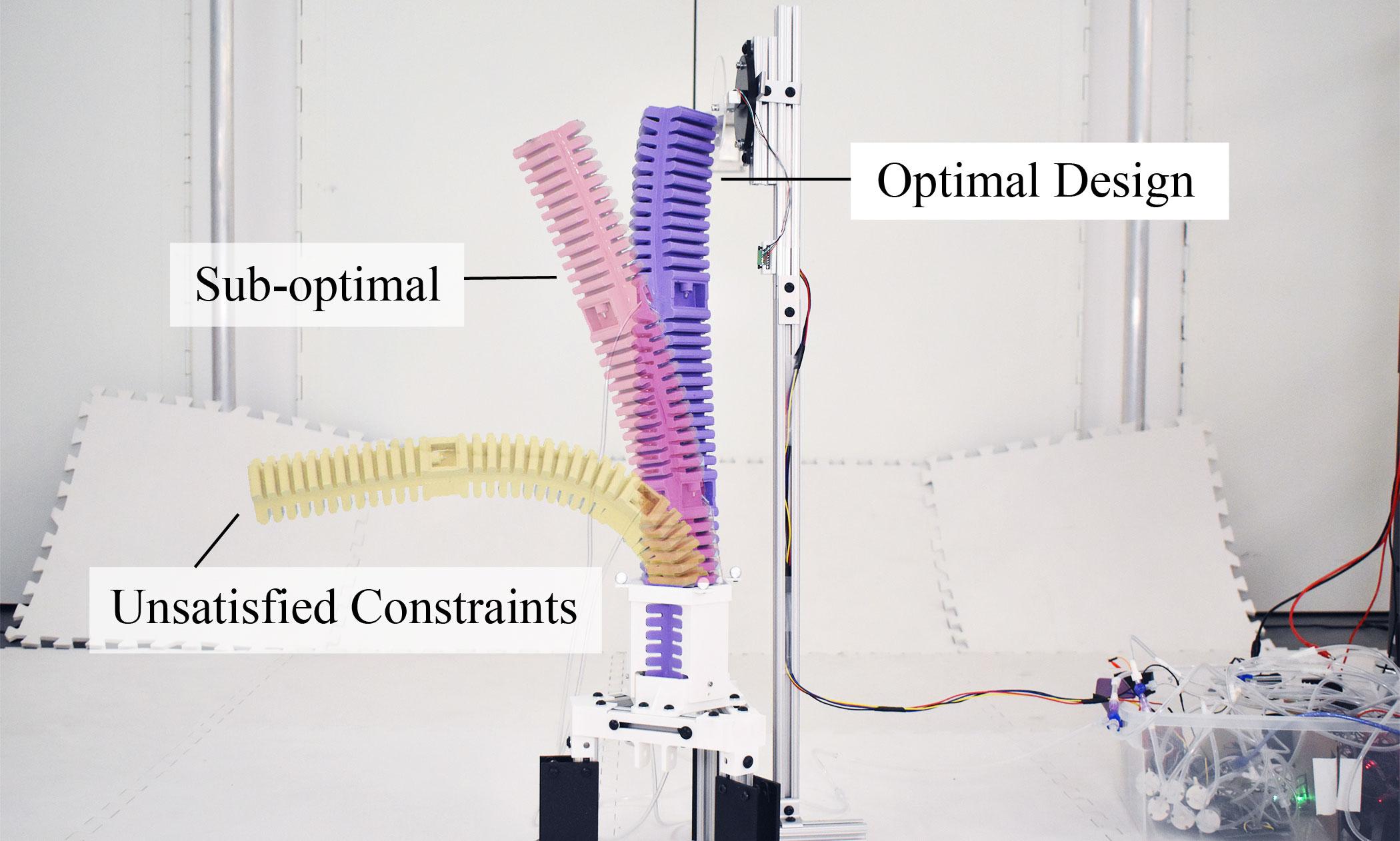}
    \caption{We propose an optimization framework for the geometry of a soft limb that maximizes the blocking force subject to a fixed amount of material and anti-buckling conditions. Hardware tests validate the anti-buckling constraint (yellow) as well as optimality for blocking force (pink vs. purple).}
    \vspace{-0.3cm}
    \label{fig:teaser}
\end{figure}

Prior work in soft robot geometry optimization does not allow this one-shot determination.
Some work relies on finite element analysis (FEA) to iteratively change design parameters until a local optimum is reached \cite{Moseley2015,Xavier2021}, commonly optimizing for workspace \cite{peng2019dimension} or range-of-motion \cite{hu2018structural} or against ballooning instability \cite{elsayed2014finite}.
This is highly task-specific and robot-specific, limiting generalizable conclusions at early stages of the design process.
Other optimization methods for soft robots use rod models, and focus on locomotion objectives rather than manipulation and forces \cite{schaff_sim--real_2023,armanini_model-based_2022}, or do not come with analytical optimality proofs \cite{pinskier_bioinspiration_2022}.
To our knowledge, no past work has optimized a soft limb under gravity without the need for iteration \cite{Chen2020, della_santina_control_survey}. 

Our method overcomes this challenge through careful approximations, a precise problem specification, and prior work in mechanics.
We propose that blocking force is an informative surrogate for more complicated physical interaction objectives, and that gravitational self-buckling \cite{greenhill} is the primary failure mode to prevent.
Under certain conditions -- specifically, with a fixed mass of the robot and a monotonic model of force -- an analytical solution exists to our constrained optimization.
We then apply the framework to a pneumatically-actuated (PneuNet) soft limb, chosen for its uniform material properties and fast response time \cite{mosadegh2014pneumatic} as well as known blocking force \cite{zhu2021modeling}.
Experiments show that our optimal geometry satisfies the constraints and outputs the highest force.

\subsection{Contribution}

This paper seeks to answer: how can soft robot limbs be optimally designed for interaction forces output while supporting their own weight?
In doing so, we contribute:
\begin{enumerate}
    \item a methodology for optimizing the geometric design of soft robotic limbs under gravity-induced loading,
    \item a heuristic approach to approximate maximum stable heights of beams with complex geometries, and
    \item a validation of the proposed approach through hardware experiments.
\end{enumerate}

Though our approach is applied to PneuNet manipulators, the problem is formulated independently of actuation mechanism.
Therefore, the core concept in this manuscript may be generalizable to other styles of soft manipulators.

\section{Methodology}
\label{sec:method}

We formulate the design of a soft robotic limb as an optimization problem that captures the tradeoff between maximizing blocking force output and maintaining structural stability under gravitational loading.

\subsection{Problem Setup}
\label{subsec:prob_setup}

Consider an arbitrary bending soft robotic limb made of homogeneous material (Fig. \ref{fig:problem_setup}(a)) defined by the design parameter vector $\boldsymbol{\theta}=[\theta_1,\theta_2,...,\theta_{n}]^\top \in \mathbb{R}^n$, each of which represents an independent geometrical dimension. Here, $n$ is the number of design parameters. We also consider $L$, which corresponds to the overall length of the limb, but do not directly optimize it. In practice, these geometric parameters are chosen based on different architectures of soft limbs.

\begin{figure}[h]
    \centering
    \vspace{0.2cm}
    \includegraphics[width=1\linewidth]{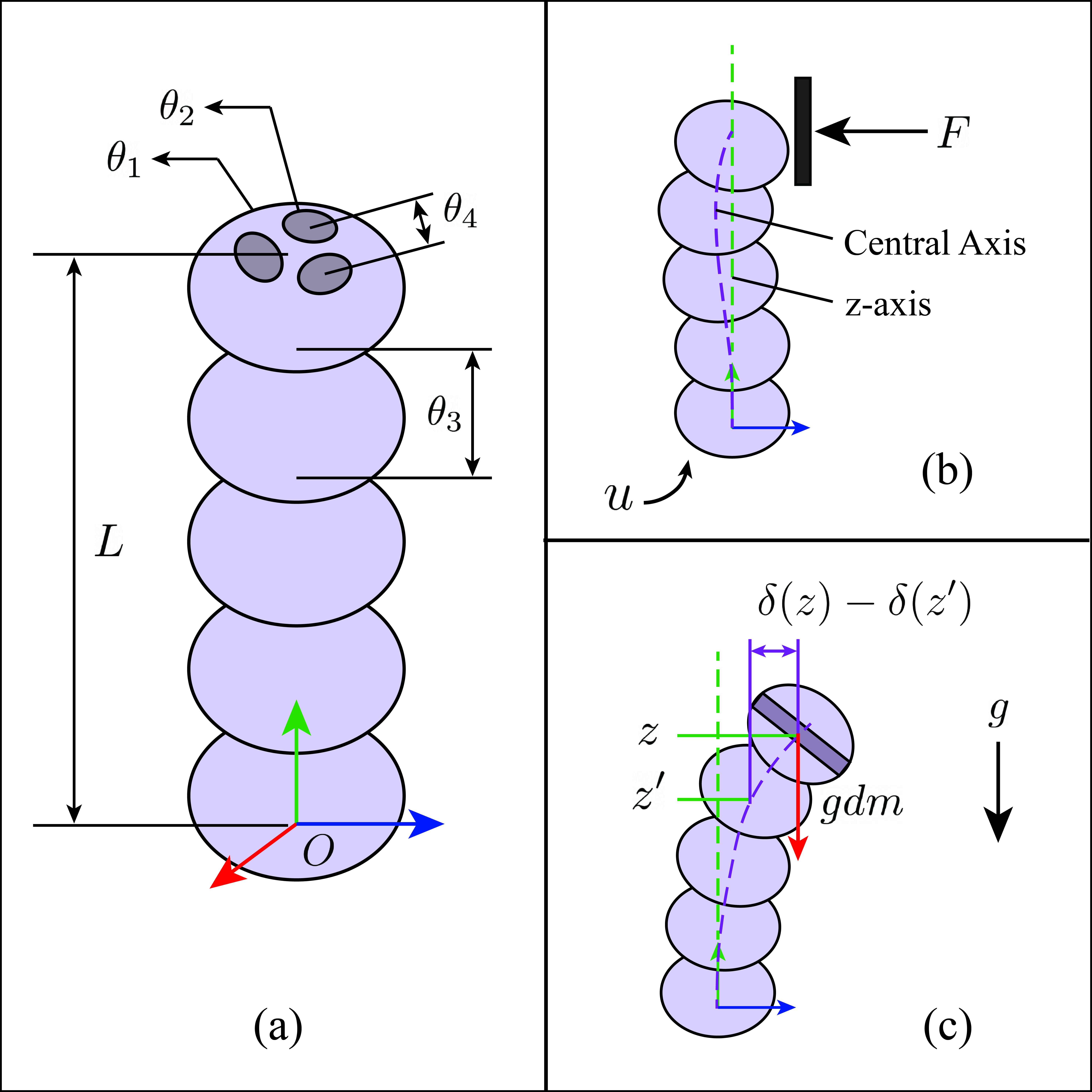}
    \caption{Problem setup of an arbitrary soft robotic limb with complex geometrical shape. a) Arbitrary geometrical design parameters of the soft limb. b) Actuated limb interacting with external blocking force. c) Limb's distributed weight under gravity.} 
    \label{fig:problem_setup}
    \vspace{-0.2cm}
\end{figure}

To derive relationships between $\boldtheta$ and the blocking force $F$, we consider interaction forces when the limb is in contact with an object (Fig. \ref{fig:problem_setup}(b)) at the static equilibrium. An actuation input $u$, such as pneumatic pressure, is applied to the limb.
At each point $z$ along the backbone, the limb's central axis deflects from the $z$-axis by $\delta(z)$.
The blocking condition imposes $\delta(L)=0$.
We assume that, in the blocking position, $\delta(z) \approx 0 \ \forall z$, so the distributed weight along the central axis exerts negligible moment around $O$. This static equilibrium is used to model the relationship between $F$, $u$, and $\boldtheta$.

Additionally, to investigate the condition required for the limb not to buckle under its own weight, we consider the case when the limb is passively standing (Fig. \ref{fig:problem_setup}(c)). 
At the static equilibrium, there exists two equally opposing moments: the gravitational loading along the beam's backbone and the moment due to the internal stiffness of the limb at any deflection $\delta(z)>0$.
We assume a uniform, linearly elastic beam oriented vertically with fixed and free ends, having a constant cross-sectional area of $A$, a constant second moment of inertia of $I$ along the arbitrary length of $L$. 
The robot has a fixed density of $\rho$ and a fixed Young's modulus of $E$. At the static equilibrium, each point $z$ along the beam's central axis is deflected by $\delta(z)$ along a radial direction. 
Consequently, at each point $z'$, there exists a moment of magnitude $\rho g A\int_{z'}^L \delta(z) - \delta(z') dz $ due to cumulative weight above it, which is opposed by the internal stiffness of the beam $-EI\frac{d^2\delta(z)}{dz^2}$ (Fig. \ref{fig:problem_setup}(c), uniform limb). 

It is known from Euler-Bernoulli beam theory \cite{greenhill} that there exists a maximum stable height $L_{max}$ for the beam under these conditions. 
The equilibrium point becomes unstable when $L>L_{max}$, inducing self-buckling.
Greenhill's solution \cite{greenhill} to this problem is the following, where we note that $I$ and $A$ are functions of the geometric dimensions $\boldtheta$:

\begin{equation}
    \label{eq:greenhill}
    L_{max}(\boldtheta) = \left( 7.8373 \dfrac{EI(\boldtheta)}{\rho g A(\boldtheta)} \right)^{1/3}.
\end{equation}

\noindent 
Since $L_{max}(\boldtheta): \R^n \mapsto \R$, the ratio $I/A$ determines $L_{max}$. 

\subsection{Optimization Approach}

We propose to find $\boldtheta^*$ that maximizes the blocking force by assuming $F$ to be a function of the limb's geometry and the actuator input, $F(\boldtheta,u): \R^{n}\times \R \mapsto \R$. 
Since we aim to find the optimal design parameters $\boldtheta^*$ that maximizes $F$, we consider a fixed value of the input $u=\bar{u}$ so that $F(\boldtheta): \R^n \mapsto \R := F(\boldtheta,\bar{u})$. 

There are many ways to pose the remaining problem setup; we envision a simplified case for initial design investigations of a class of robot geometries.
We propose a maximum mass of robot, which maps to a maximum volume of material: $V(\boldtheta): \R^n \mapsto \R$, $V< \bar{V}$.
Additionally, there must be some minimum length of the limb for workspace reasons; we set the length $L=\bar{L}$.
The second constraint then becomes anti-buckling, $L_{max}(\theta) \geq \bar{L}$.
Combined, the optimization problem is:

\begin{align}
\boldsymbol{\theta}^* &= \operatorname*{argmax}_{\boldsymbol{\theta}} F(\boldsymbol{\theta}) \label{eq:obj} \\
\text{s.t.}\hspace{0.3cm} 
& V(\boldsymbol{\theta}) \leq \bar{V} \label{eq:volume_const} \\
& L_{max}(\boldsymbol{\theta}) \geq \bar{L}  \label{eq:gh_const}
\end{align}

\noindent To solve $\boldtheta^*$ analytically, we introduce Assumptions \ref{assm:closed_bounded} and \ref{assm:f_increase}, followed by Theorem \ref{thm:active}. Let a set of feasible designs $\Omega \in \R^n$ be a set of all design variables that satisfies \eqref{eq:volume_const} and \eqref{eq:gh_const}, as:
\begin{equation}
\label{eq:omega}
    \Omega = \{ \boldtheta \in \R^n : V(\boldtheta) \leq \bar{V}, L_{max}(\boldtheta) \geq \bar{L} \}
\end{equation}

\begin{assumption}[Convex $\Omega$]
    \label{assm:closed_bounded} 
    For any $\boldtheta_1$, $\boldtheta_2 \in \Omega$, the set $\{ \gamma \boldtheta_1 + (1-\gamma) \boldtheta_2 \} \in \Omega$ for any $0 \leq \gamma \leq 1$, i.e., $\Omega$ is convex.
\end{assumption}

\begin{assumption}[Componentwise monotonicity]
\label{assm:f_increase} 
We assume that $F(\boldtheta)$ is smooth and differentiable on $\boldtheta \in \Omega$, that $\left| \frac{\partial F}{\partial \theta_k} \right| > 0$ for all $\theta_k \in \boldtheta \in \Omega$ and $k\in\{1,2,...,n\}$, i.e., $F(\boldtheta)$ is strictly monotonic in each coordinate.
\end{assumption}

Our insight is that there exists a unique, optimal solution $\boldtheta^*\in \Omega$, solved analytically under Assumptions \ref{assm:closed_bounded} and \ref{assm:f_increase}:

\begin{theorem}[Active Constraint]
\label{thm:active}
    Consider the objective function $F(\boldtheta)$ with $\boldtheta \in \Omega$. If Assumptions \ref{assm:closed_bounded} and \ref{assm:f_increase} hold, the maximizer \eqref{eq:obj} is globally optimal and lies on the boundary of $\Omega$, where at least one inequality from \eqref{eq:volume_const} and \eqref{eq:gh_const} is active.
\end{theorem}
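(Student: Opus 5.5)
The plan has two parts. First, a maximizer cannot sit in the interior of $\Omega$, because there the gradient of $F$ is nonzero; hence it lies on $\partial\Omega$, where one of \eqref{eq:volume_const}, \eqref{eq:gh_const} is tight. Second, convexity of $\Omega$ from Assumption \ref{assm:closed_bounded} is what we will use to argue global optimality.

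\textbf{Existence.} We need a maximizer to exist. $\Omega$ is closed, since $V$ and $L_{max}$ are continuous functions of $\boldtheta$; $L_{max}$ is continuous wherever $A(\boldtheta)>0$ by \eqref{eq:greenhill}. We will also take $\Omega$ to be bounded, which is natural because the volume bound \eqref{eq:volume_const} caps every geometric dimension. Then $\Omega$ is compact, and since $F$ is smooth (Assumption \ref{assm:f_increase}), Weierstrass gives a maximizer $\boldtheta^*\in\Omega$. Boundedness is not literally part of Assumption \ref{assm:closed_bounded}, so I would state it explicitly as implied by the physical volume constraint.

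\textbf{Boundary location.} Suppose $\boldtheta^*$ were an interior point of $\Omega$, meaning $V(\boldtheta^*)<\bar V$ and $L_{max}(\boldtheta^*)>\bar L$. By continuity, some ball $B_\epsilon(\boldtheta^*)$ lies in $\Omega$. Assumption \ref{assm:f_increase} gives $\partial F/\partial\theta_k\neq 0$ for every $k$, in particular $k=1$. Take $\boldtheta^*+t\,\mathrm{sign}(\partial F/\partial\theta_1)\,e_1$ for small $t>0$. This point is still in the ball, and a first-order Taylor expansion shows it strictly increases $F$. That contradicts maximality, so $\boldtheta^*\in\partial\Omega$. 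Since $\partial\Omega\subseteq\{V=\bar V\}\cup\{L_{max}=\bar L\}$ by continuity, at least one inequality is active.

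\textbf{Global optimality.} This is the step I expect to be the main obstacle. A maximizer obtained by Weierstrass is already a global maximizer over $\Omega$, so "globally optimal" needs no further work in that sense. The harder question is whether the paper's implied uniqueness claim, or an argument that any local maximizer is global, holds. Convexity of $\Omega$ alone does not ensure either one when $F$ is merely monotone rather than concave or quasiconcave. I would therefore prove global optimality via compactness as above. Where convexity enters is in guaranteeing that the segment from any feasible point to $\boldtheta^*$ stays feasible, which supports the first-order condition $\nabla F(\boldtheta^*)^\top(\boldtheta-\boldtheta^*)\le 0$ for all $\boldtheta\in\Omega$. If uniqueness is also wanted, I would flag that it needs an extra hypothesis, such as quasiconcavity of $F$ or strict convexity of $\Omega$ with a supporting-hyperplane argument. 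I would not try to derive uniqueness from Assumptions \ref{assm:closed_bounded}--\ref{assm:f_increase} alone.
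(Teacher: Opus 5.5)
Your proposal is correct, and its core is the paper's argument: assume the maximizer is interior, step along a coordinate axis in the direction of the sign of $\partial F/\partial\theta_k$ to strictly increase $F$, and conclude by contradiction that $\boldtheta^*\in\partial\Omega$ with a constraint active. The paper uses two directions, $+\boldsymbol{e}_i$ and $-\boldsymbol{e}_j$, and credits convexity for the open neighborhood, but as you observe, one direction and interiority alone suffice.

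The paper simply takes a global maximizer as given, so ``globally optimal'' holds by definition there. Your Weierstrass existence step goes beyond this. If you keep it, you should assume $\Omega$ compact outright, because neither boundedness nor closedness follows from the stated assumptions. The volume bound need not cap one dimension if another shrinks. Points of $\Omega$ can also accumulate where $A(\boldtheta)\to 0$ and $L_{max}$ is undefined.
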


\begin{proof}
    Let $F(\boldtheta)$ be smooth on the convex set $\Omega \subset \R^n$ (Assumption \ref{assm:closed_bounded}), and suppose $F$ is either strictly increasing ($ \frac{\partial F}{\partial \theta_i}  > 0$) or strictly decreasing ($ \frac{\partial F}{\partial \theta_j} < 0$) for all $i,j \in \{1,2,...,n\}$ where $i \neq j$ (Assumption \ref{assm:f_increase}).\\
    \indent Let $\boldtheta^*$ be a global maximizer of $F$ over $\Omega$, i.e., $\boldtheta^* \in \argmax_{\boldtheta \in \Omega} F(\boldtheta)$.
    We will prove by contradiction that $\boldtheta^*$ cannot lie in the interior of $\Omega$; hence $\boldtheta^*$ must lie on the boundary
    of $\Omega$, i.e., at least one constraint must be active.\\
    \indent Assume that $\boldtheta^*$ is an interior point of $\Omega$, for contradiction. Since $\Omega$ is convex, there exists an open neighborhood $U \subset \Omega$ that contains $\boldtheta^*$.
    Because $F$ is smooth and strictly increasing in direction $\boldsymbol{e}_i$, there exists sufficiently small $\epsilon_1>0$ s.t. $\boldtheta^* + \epsilon_1 \boldsymbol{e}_i \in U \subset \Omega$, and by strict positivity of partial derivative in that direction, $F(\boldtheta^* + \epsilon_1 \boldsymbol{e}_i) > F(\boldtheta^*)$.\\
    \indent Similarly, because $F$ is strictly decreasing in direction $\boldsymbol{e}_j$, there exists sufficiently small $\epsilon_2 > 0$ s.t. $\boldtheta^* - \epsilon_2 \boldsymbol{e}_j \in U \subset \Omega$, and by strict negativity of partial derivative in that direction, $F(\boldtheta^* - \epsilon_2 \boldsymbol{e}_j) > F(\boldtheta^*)$. The existence of $F(\boldtheta^* + \epsilon_1 \boldsymbol{e}_i)$ and $F(\boldtheta^* - \epsilon_2 \boldsymbol{e}_j)$ contradict the assumption that $\boldtheta^*$ lies in the interior of $\Omega$. Therefore, $\boldtheta^*$ must lie on the boundary of $\Omega$, i.e., at least one of the inequality constraint \eqref{eq:volume_const} or \eqref{eq:gh_const} is active. Specifically, either $V(\boldtheta^*)=\bar{V}$ or $L^* = L_{max}$, where $L^* \in \boldtheta^*$.
\end{proof}
\begin{remark}
\label{rem:thm}
    The primary use of Theorem \ref{thm:active} is in choosing the appropriate constraints during the design and manufacturing process, $\bar{V}$ and $\bar{L}$, that make Assumptions \ref{assm:closed_bounded} and \ref{assm:f_increase} true, which leads to an analytical solution to the optimization problem.
\end{remark}

\begin{remark}
    \label{rem:thm_limitation}
    We note that the global optimizer $\boldtheta^*$ is not necessarily unique, depending on the boundary of $\Omega$ and the objective function $F$. These details do not appear in our proof-of-concept below, and so are left for future work.
\end{remark}

\subsection{Heuristic Approximation for Nonuniform Geometries}

There exists a challenge in applying Eq. \eqref{eq:greenhill} in constraint \eqref{eq:gh_const} to some classes of soft limbs.
Eq. \eqref{eq:greenhill} assumes the limb to be cross-sectionally uniform, meaning the properties $I(\boldtheta)$ and $A(\boldtheta)$ are constant along the limb's length, which is not always true, such as for the cartoon limb shown in Fig. \ref{fig:problem_setup}a. We propose that if Assumption \ref{assm:sym_periodic} holds, Assumption \ref{assm:uniform_beam} can be applied to our limb. 

\begin{assumption}
    \label{assm:sym_periodic}
    $A(\boldtheta,z)$ and $I(\boldtheta,z)$ are symmetrical and periodic along the central axis $z$.
\end{assumption}

\begin{assumption}
    \label{assm:uniform_beam}
    If Assumption \ref{assm:sym_periodic} holds, then $L_{max}(\boldtheta)$ can be under-approximated by that of a uniform beam with $I_{min} = \min_z I(\boldtheta,z)$ and $A_{approx} =  V(\boldtheta)/L$.
\end{assumption}

Applying Assumption \ref{assm:uniform_beam}, we modify Eq. \eqref{eq:greenhill} by approximating $I=I_{min}$ and $A=A_{approx}$. Considering the denominator $\rho A(\boldtheta) = \rho A_{approx} = \rho V(\boldtheta)/L$, which is the limb's average linear density, denoted as $\lambda(\boldtheta) = \rho V(\boldtheta)/L$. 
Inserting these assumptions produces a modified $L_{max}$ to be used as constraint (\ref{eq:gh_const}),

\begin{equation}
    \label{eq:approx_greenhill}
    L_{max}(\boldtheta) = \left( 7.8373 \dfrac{EI_{min}(\boldtheta)}{ g \lambda(\boldtheta)} \right)^{1/3}
\end{equation}

We will show in section \ref{sec:app:obj} that our limb design satisfies the conditions of Assumption \ref{assm:sym_periodic}, and in section \ref{sec:res:buck} that Assumption \ref{assm:uniform_beam} holds, i.e., Eq. \eqref{eq:approx_greenhill} can approximate $L_{max}(\boldtheta)$.

\begin{remark}
\label{rem:lmax}
    An analytical derivation of $L_{max}(\boldtheta)$ of a non-uniform beam is beyond the scope of this manuscript and is left for future work to evaluate against Assumption \ref{assm:uniform_beam}.
\end{remark}

\section{Application to Pneumatic Network Actuators}
\label{sec:application}

We apply the optimization methodology to an example architecture to demonstrate its validity.
We choose to design our soft limb based on the fast PneuNet \cite{mosadegh2014pneumatic} (Fig. \ref{fig:limb_dimension}(a)) due to its quick inflation time and known blocking force model.
The proof-of-concept design attaches three individual PneuNets (Fig. \ref{fig:limb_dimension}(b)) to create the simplest possible three-dimensional robot arm.
The actuator input $u$ is pneumatic pressure.

\begin{figure}[h]
    \centering
    \includegraphics[width=1\linewidth]{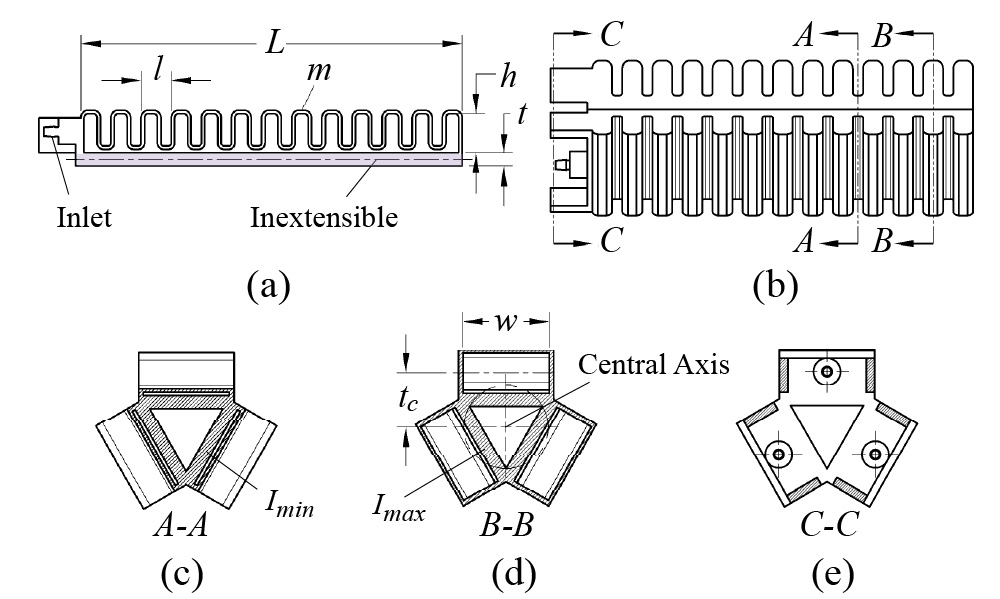}
    \caption{Design parameters of the prototype PneuNet soft manipulator. (a) 1D PneuNet. (b) Prototype limb made of three PneuNets, with the bottom side oriented on the left. (c) Cross-section at $I_{min}$ in hatched area. (d) Cross-section at $I_{max}$ in hatched area. (e) Cross-section at the connection in hatched area.}
    \label{fig:limb_dimension}
\end{figure}

\subsection{Design Parameters}
 
Consider a one-dimensional PneuNet with multiple chambers (Fig. \ref{fig:limb_dimension}(a)). 
We first parameterize our limb geometry (Fig. \ref{fig:limb_dimension}(b)) by $\boldtheta := [h, w, t, m, \ell, L]^\top \in \R^6$, where $h$ is the chamber height, $w$ is the chamber width, $t$ is the the inextensible layer thickness, $m$ is the number of chambers, $\ell$ is the length of a single chamber, and $L$ is the total limb length (See Fig. \ref{fig:limb_dimension}(a) and \ref{fig:limb_dimension}(d)). 

Cross-sectional views A–A and B–B in Fig. \ref{fig:limb_dimension}(c) and \ref{fig:limb_dimension}(d) depict the cross sections with minimum and maximum second moment of inertia $I_{min}$ and $I_{max}$, respectively. Cross-sectional view C-C 
(Fig \ref{fig:limb_dimension}(e)) shows the cross-section of the connecting area, which either connects to a base fixture or concatenates multiple segments of limb (further discussed in Section \ref{sec:ex:fab}). 

For proof-of-concept, we fix additional geometric quantities due to manufacturing limits (omitted). The chamber length $\ell$ relates to the total height and number of chambers by $\ell:= L/m$. We set $\ell$ to be constant (Table \ref{tab:variables}), and because we predetermine that $L=\bar{L}$, $m$ is also a constant. 
For a proof-of-concept, we fix $h$ and $t$ (Table \ref{tab:variables}) and allow only $w$ to vary. 
This reduces manufacturing variation.
Hence, $\boldtheta:= w \in \R$.

\begin{table}[h]
    \centering
    \caption{Numerical Values Used for Optimization Analysis}
    \begin{tabular}{c c c c}
    \hline \vspace{-0.2cm} \\
        \vspace{0.08cm}
        Quantities & Variables & Values & Units \\
        \hline \vspace{-0.2cm} \\
        Fixed Variables & $l$ & $0.014$ & m \\
        & $h$ & $0.018$ & m \\
        & $t$ & $0.006$ & m \vspace{0.08cm} \\
        \hline \vspace{-0.2cm} \\
        Design Requirements & $\bar{L}$ & $0.49$ & m \\
        & $\bar{V}$ & $8.387e^{-4}$ & m$^3$ \vspace{0.08cm} \\
        \hline \vspace{-0.2cm}  \\
        Material Properties \cite{smooth_sil} & $E_{sil}$ & $1.793$ & MPa \\
        & $\rho_{sil}$ & $1240$ & kg/m$^3$ \vspace{0.08cm}\\
        \hline
    \end{tabular}
    \label{tab:variables}
\end{table}

The robot is constructed from Smooth-On Smooth-Sil 945 \cite{smooth_sil}, giving $E$ and $\rho$.
Here, $\bar{L}$ is selected to be roughly close to the length of a human's arm, and $\bar{V}$ is selected based on $1.04 \ kg$ of Smooth-Sil 945.

\subsection{Modeling of PneuNet Blocking Force}
\label{sec:application-force}

To determine the objective function for our optimization problem, we consider the blocking force model of a 1D PneuNet. Much prior work has developed analytical blocking force models of the 1D PneuNet, such as \cite{wang2022analytical}, \cite{alici2018modeling}, and \cite{zhu2021modeling}.
For the scope of this manuscript, we consider $F$ derived by \cite{zhu2021modeling} due to the similarity of geometry, simplified to:

\begin{equation}
\label{eq:force}
    F(w) = 2(\frac{7}{4}-\alpha) \bar{u} \frac{(m-1) hwt_c}{L}.
\end{equation}

Here, $\alpha$ represents a coefficient term, which can be found experimentally \cite{zhu2021modeling}. $t_c$ is the distance between the neutral axis (location at which there is no longitudinal strain) and the location of the effective pressure that causes the moment (See Fig. \ref{fig:limb_dimension}(d)). The actuator input $u$ in this case represents pneumatic gauge pressure.

\subsection{Determining Objective Function}
\label{sec:app:obj}

We now apply the blocking force model of the 1D PneuNet to our limb using the following additional assumptions.

\begin{enumerate}
    \setcounter{enumi}{4}
    \item $F(w)$ in eq. \eqref{eq:force} is the blocking force when only one side of the limb is actuated.
    \item The central axis of the limb is inextensible, and the location of the effective pressure is in the middle of the chamber's face, i.e., $t_c = \frac{h}{2} + t + \frac{w\tan(\pi/6)}{2}$ (See Fig. \ref{fig:limb_dimension}d).
    \item $\alpha$ is an unknown positive constant, independent of $\boldtheta$. This allows modeling of $F(\boldtheta)$ to be closed form.
\end{enumerate}

In the upright blocking pose, $\delta(z) \approx 0 \ \forall z \in [0, L]$ (Sec. \ref{subsec:prob_setup}).
Therefore eqn. (\ref{eq:force}) applies, becoming $F(w) = 2(\frac{7}{4}-\alpha) \bar{u} \frac{(m-1) hw(\frac{h}{2}+t+\frac{w}{2}\tan{(\frac{\pi}{6})})}{\bar{L}}$. We simplify the force model further to use as an objective function. Since $\boldtheta=w$, we omit the constant term, which only scales the variable terms, for simplicity (the constant = $\frac{2(7/4-\alpha)\bar{u}(m-1)h}{\bar{L}}$). The objective function becomes proportional to:

\begin{equation}
    \label{eq:obj_final}
    F(w) \propto w(\frac{h}{2}+t+\frac{w}{2}\tan (\frac{\pi}{6})).
\end{equation}

\noindent We emphasize that $F(w)$ is in a positive quadratic form, from which a monotonicity argument can be made under certain conditions. Therefore, the model does not need to be calibrated: we do not need specific values of the constants (e.g., $\alpha$) from eqn. \ref{eq:force} in order for Theorem 1 to be applied.

Rewriting the optimization problem using $w$,
\begin{align}
w^* &= \argmax_{w} F(w) \label{eq:obj2} \\
\text{s.t.}\hspace{0.3cm}
& V(w) \leq \bar{V} \label{eq:volume_const2} \\
& L_{max}(w) \geq \bar{L} \label{eq:gh_const2}
\end{align}

As shown in Fig. \ref{fig:limb_dimension}, we design our limb to satisfy the conditions for Assumption \ref{assm:sym_periodic}, i.e., the limb's cross-sectional geometry is symmetrical and periodic along the central axis, and the cross-section with the lowest radius of gyration (Fig. \ref{fig:limb_dimension}(c)) stays close to $z=0$. By applying Assumption \ref{assm:uniform_beam}, we use equation \eqref{eq:approx_greenhill} for $L_{max}$.

\subsection{Analytical Solution to Optimization Via Theorem \ref{thm:active}}
\label{sec:apply_numbers}

Combining all these observations, we find $w^*$ analytically by first showing that Assumptions \ref{assm:closed_bounded} and \ref{assm:f_increase} hold for any $w \in \Omega$, and then applying Theorem \ref{thm:active}. 
Applying the example design requirements and material properties from Table \ref{tab:variables} to our optimization problem gives the functions \eqref{eq:obj2}, \eqref{eq:volume_const2}, and \eqref{eq:gh_const2} in terms of only the unknown $w$.
These functions with their constants filled are omitted for brevity.

We now show that Assumption \ref{assm:closed_bounded} holds by analytically finding $\Omega$.
For $V(w)$ with the Table \ref{tab:variables} values, eqn. \eqref{eq:volume_const2} is smooth and strictly increasing for $w > 0$. 
Let $w_b \in \Omega$ be a value such that $V(w_b) = \bar{V}$. 
Because $\frac{dV(w)}{dw} > 0$, $w_b > w$ for any $w \in \{ w\ |\ V(w) < \bar{V}\}$, which means $w_b$ is an upper bound of $\Omega$. 
We also find that $L_{max}(w)$ (Eq. \eqref{eq:approx_greenhill}) is smooth and strictly increasing for any $w > 0$. Let $w_a \in \Omega$ be a value such that $L_{max}(w_a)=\bar{L}$. Because $\frac{dL_{max}(w)}{dw} > 0$, $w_a < w$ for any $w \in \{ w\ |\ L_{max}(w) > \bar{L}\}$, which means $w_a$ is a lower bound of $\Omega$. 
So, $\Omega = \{ w \ | \ w_a \leq w \leq w_b \}$. Using values from Table \ref{tab:variables}, we back-calculate the values of chamber widths $w_a$ and $w_b$ through $V(w)$ and $L_{max}(w)$ from Eq. \eqref{eq:approx_greenhill}, which results in $\Omega = [0.0367, 0.0452]$ meters.
Because, for any $\gamma \in (0,1)$, $\gamma w_a + (1-\gamma)w_b \in \Omega$, $\Omega$ is convex, and therefore, Assumption \ref{assm:closed_bounded} holds.

To show that Assumption \ref{assm:f_increase} holds, consider equation \eqref{eq:obj_final}. We find that $\frac{dF(w)}{dw} > 0$ on $w \in \Omega$, which shows that $F$ is strictly increasing (monotonic). Therefore, Assumption \ref{assm:f_increase} holds. Since Assumptions \ref{assm:closed_bounded} and \ref{assm:f_increase} hold, we apply Theorem \ref{thm:active} and find $w^*$ on the boundary of $\Omega$. Since we know that $F(w)$ is strictly increasing for $w\in \Omega$, $w^* = w_b$, which, is equal to $0.0452$ meters. We then rounded $w^*=0.045$ meters due to manufacturing uncertainty.

\section{Hardware Experimentation}
\label{sec:ex}

We validate our design optimization approach experimentally in hardware by manufacturing three different soft robotic limbs with varying design parameters ($w$): one of which is optimal ($w^*=0.045$ meters), one is sub-optimal ($w = 0.040$ meters), and another does not satisfy the anti-buckling constraint in Eq. \eqref{eq:gh_const} ($w = 0.030 \text{ meters } \notin \Omega$). We denote these limbs as $w_{45}$, $w_{40}$, and $w_{30}$, respectively. 

\begin{figure}[h]
    \centering
    \includegraphics[width=1\linewidth]{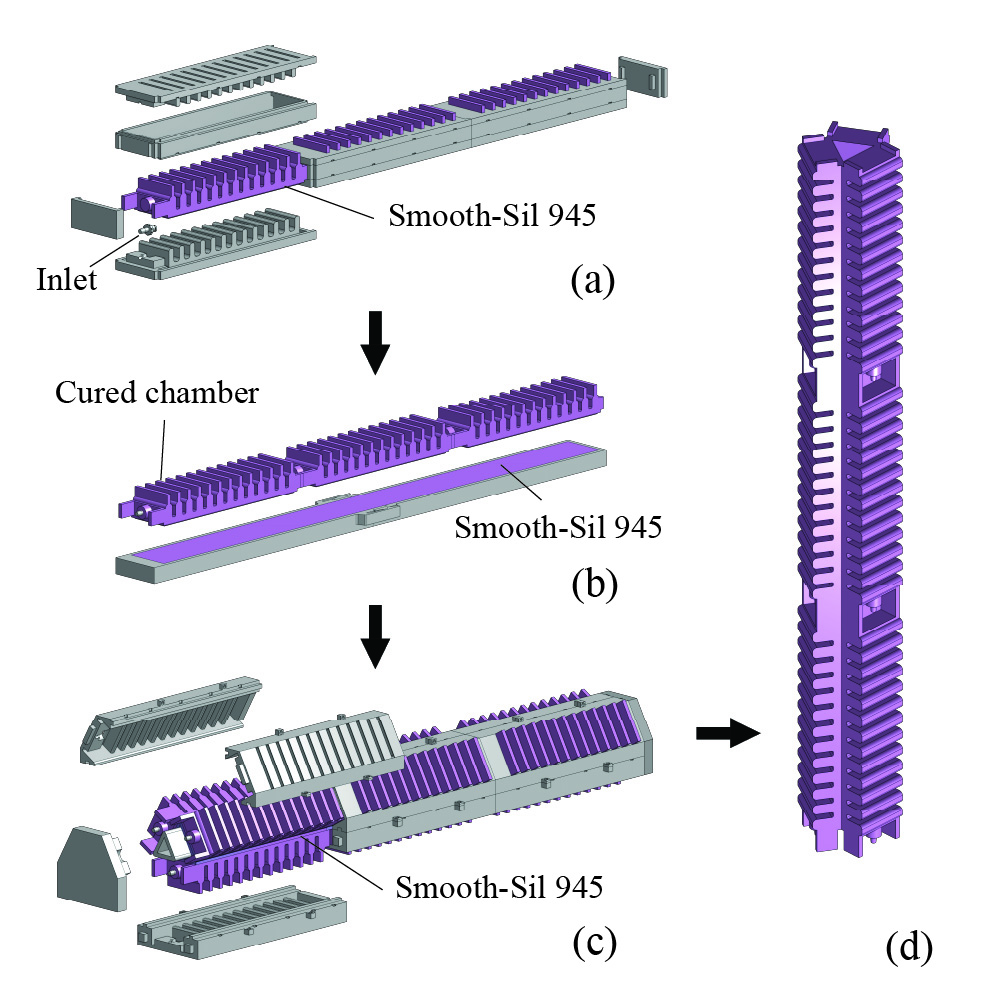}
    \caption{Manufacturing steps. (a) Mold casting chambers. (b) Overmolding seal for PneuNet. (c) Overmolding connection for multiple PneuNets to become one manipulator limb. (d) Final limb after removing supports.}
    \vspace{-0.3cm}
    \label{fig:manufact}
\end{figure}

\subsection{Fabrication}
\label{sec:ex:fab}

We fabricated the limb by performing a series of silicone mold casting \cite{rus2015design}, similarly to prior work \cite{ilievski2011soft,dickson2025safe}, which is adapted from the soft lithography technique \cite{xia1998soft}. 
The molds are 3D printed using ABS plastic and BambuLab X1C printers. 
The printer has a print bed of size $0.256 \times 0.256$ m$^2$, which is too small for molds for a limb of length $\bar{L} = 0.49$ m. 
We solve this issue by dividing the limb into three segments as shown in Fig. \ref{fig:manufact}(a). The segments are concatenated in series during each mold casting step using repeated sets of molds with connecting holes (See Fig. \ref{fig:manufact}(a)-(c)). This reduces failure over the connection area. 
We also ensure that the ratio $I/A$ of the connection area (Fig \ref{fig:limb_dimension}(e)) is higher than the minimum of the limb (Fig. \ref{fig:limb_dimension}(c)), so that Assumption \ref{assm:sym_periodic} still applies.
Our manufacturing technique includes three consecutive mold casting processes: casting chamber cavity (Fig. \ref{fig:manufact}(a)), overmolding the chamber's seal (this process creates the 1D PneuNet in Fig. \ref{fig:manufact}(b)), and attaching three sides of PneuNets (Fig. \ref{fig:manufact}(c)). 
The finalized limb is shown in Fig. \ref{fig:manufact}(d).

\subsection{Determining Maximum Stable Length, $L_{max}$}
\label{sec:ex:buck}

We evaluate Assumption \ref{assm:uniform_beam} by experimentally measuring the maximum stable height of each limb.
To do so, we manufactured each limb longer than its $L_{max}$, at roughly $0.6 \ m$.
We then vary the effective height $L$ of the limb and measure the tip deflection $\delta(L)$, denoted $\delta$ for simplicity, at the static equilibrium.
To imitate various $L$ of limbs, we design varying sizes of chamber clamps (Fig. \ref{fig:buckling_setup}(a)). Each clamp locks a certain number of chambers and has a length of multiples of $\ell$. As a result, the movable height of limbs, $L$, changes as shown in Fig. \ref{fig:buckling_setup}(a)-(c).
The limb and chamber clamps are attached to a stable base composed of 3D-printed parts and 80/20 aluminum bars.

\begin{figure}[h]
    \centering
    \includegraphics[width=1\linewidth]{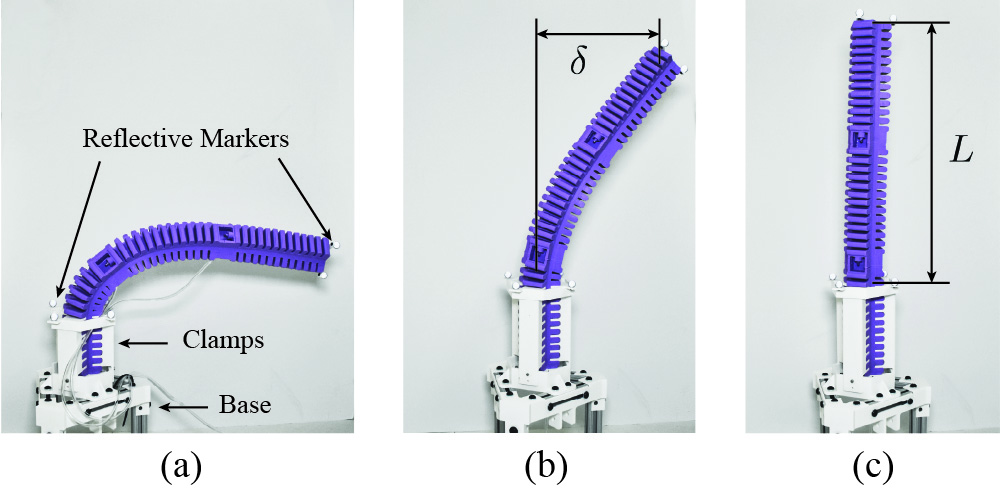}
    \caption{Buckling test setup: a) Limb buckled at maximum length, b) Limb deflected by $\delta$ due to self-buckling with lower-length, c) Limb unbuckled at length $L$.}
    \label{fig:buckling_setup}
\end{figure}

For each limb at each $L$, we measure $\delta$ (Fig. \ref{fig:buckling_setup}(b)) using a motion capture system (OptiTrack), which comprises 44 infrared cameras running at 120 hertz that detect the reflective markers (Fig. \ref{fig:buckling_setup}(a)). 
We define two sets of markers, which represent the poses of the central axis at the base ($z=0$) and at the tip ($z=L$). We record $\delta$ once the limb at each height $L$ has stabilized. We repeat the test five times and remove outliers, which are all test data $\delta_i$ such that $\delta_i - \tilde{\delta} > 0.05$ meters where $\tilde{\delta}$ is the median of that deflection for the same limb at length $L$. We then take the average of the filtered data.
Fig. \ref{fig:buckling_setup}(c) shows the limb at an unbuckled height, i.e., $L \leq L_{max}$. 
It is expected that $w_{30}$ limb buckles at all lengths.

\subsection{Optimal Design Validation}

We then measure the blocking force of each limb at the length $L=\bar{L}$ to validate the optimality of our result (See Fig. \ref{fig:force_setup}). 
The base of each limb is clamped so that only $0.49$ meters of the limb is free to move, as in Sec. \ref{sec:ex:buck}. 
We note that each limb satisfies the limited volume constraint \eqref{eq:volume_const2} at length $L=\bar{L}$.
Additionally, we connect a pneumatic system to one side of the limb to actuate and collect force readings at different internal pressures. 
The system is adapted from prior work \cite{dickson2025safe}, which consists of a pneumatic pump connected to an Arduino Mega microcontroller. 
We perform closed-loop pressure control per \cite{dickson2025safe}, measuring the internal pressure with a Honeywell MPRLS sensor, and the blocking force using a strain gauge sensor (Fig \ref{fig:force_setup}(a)). We record $F$ at $u \in[0, 290]$ hPa gauge pressure, only in an inflating direction. 
It is expected that the limb $w_{45}$ will result in the highest $F$ at all $u$.

\begin{figure}[h]
    \centering
    \includegraphics[width=1\linewidth]{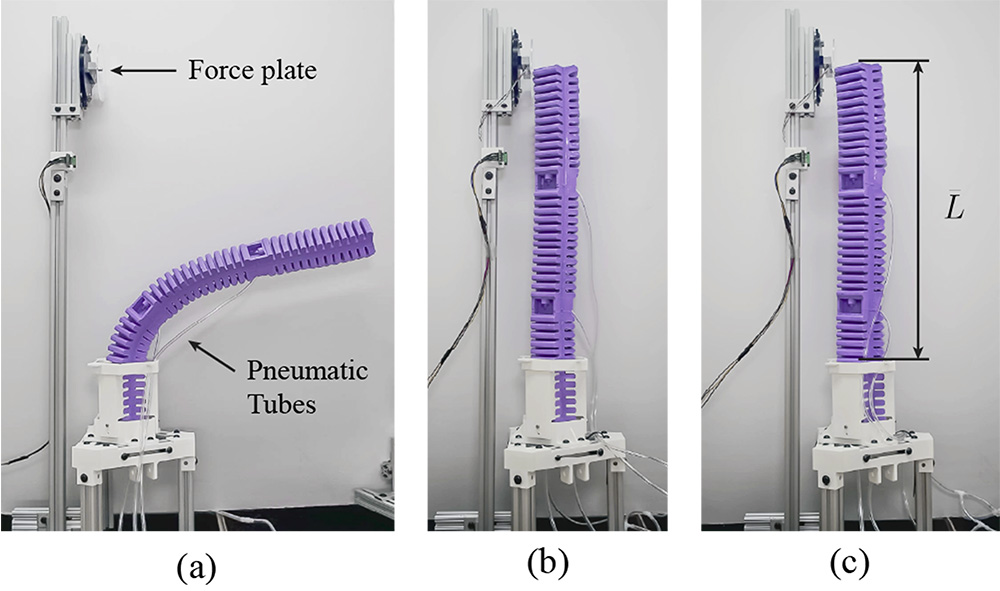}
    \caption{Blocking force test setup of limb, with (a) $w_{30}$, (b) $w_{40}$, (c) $w_{45}$.}
    \label{fig:force_setup}
\end{figure}

\section{Experimental Results}
\label{sec:results}

Results show that that our method can be used to distinguish between the three prototype designs.
Assumption \ref{assm:uniform_beam} to approximate $L_{max}$ holds with a maximum error on the order of a few centimeters, and the optimal $w_{45}$ manipulator produces the largest blocking force.

\subsection{Buckling Length}
\label{sec:res:buck}

To determine the buckling length, and therefore the quality of the Assumption \ref{assm:uniform_beam} eqn. \eqref{eq:approx_greenhill}, we plot the tip deflection $\delta$ at different lengths $L$ of each limb (Fig. \ref{fig:buckling_result}). 
Buckling instability is not immediately obvious at some specific $L$, but rather, the beam gradually deflects, which is consistent with the behavior of a linearly elastic column with initial imperfection studied in \cite{virgin2004postbuckling}.
To overcome this limitation, we propose an \textit{empirical} buckling length $\hat{L}_{max}$ which can be calculated from our data:

\begin{definition}
\label{def:Lmax}
    The empirical buckling length of the soft limb, $\hat{L}_{max}$, is the length at which the limb's rate of change in deflection along the change of length is maximum, i.e., $\hat{L}_{max} = \argmax_L \frac{d \delta}{dL}$.
\end{definition}

We apply Definition \ref{def:Lmax} by choosing the midpoint in the range that contains the steepest slope, which leads to the uncertainty of half the distance between each point, i.e., $\pm 0.007$ meters. 
We visualize the slope through piecewise linear interpolation between each pair of consecutive data points and $\hat{L}_{max}$ in cross symbols in Fig. \ref{fig:buckling_result}.
Table \ref{tab:buck_res} shows $L_{max}$ obtained through approximation \eqref{eq:approx_greenhill}, $\hat{L}_{max}$, and the discrepancy $\Delta L_{max} =|L_{max}-\hat{L}_{max}|$ for the three limbs.
The maximum error is 3.5 cm for these half-meter prototypes, approximately 6\% of the robot's height.

\begin{figure}[t]
    \centering
    \vspace{0.2cm}
    \includegraphics[width=1\linewidth]{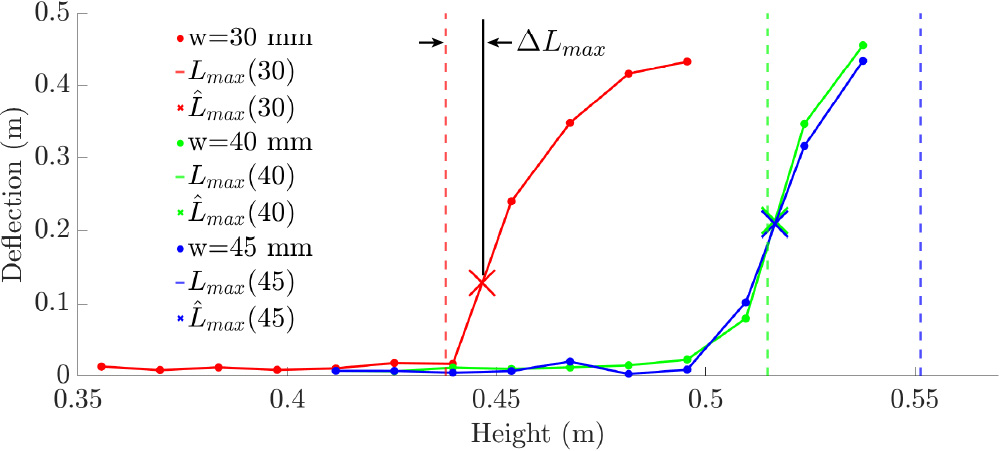}
    \caption{Empirical $\delta$ at varying $L$ of each limb where the cross shows empirical $\hat{L}_{max}$ and the dashed lines represent ${L}_{max}$ calculated from equation \eqref{eq:approx_greenhill}. $\Delta L_{max}$ is visualized for $w_{30}$ as an example.}
    \label{fig:buckling_result}
\end{figure}

\begin{table}[h]
    \centering
    \caption{Approximated and empirical buckling length of each limb}
    \begin{tabular}{c c c c}
    \hline \vspace{-0.2cm} \\
        \vspace{0.08cm}
        Limb & $w_{30}$ & $w_{40}$ & $w_{45}$ \\
        \hline \vspace{-0.2cm} \\
        \vspace{0.08cm}
        $L_{max} \ \text{m}$ & $0.438$ & $0.515$ & $0.551$ \\
        $\hat{L}_{max} \ (\pm 0.007 \ \text{m})$ & $0.447$ & $0.517$ & $0.517$ \\
        $\Delta L_{max} \ (\text{m})$ & $0.009$ & $0.002$ & $0.035$ \vspace{0.08cm} \\
        \hline
    \end{tabular}
    \label{tab:buck_res}
\end{table}

\subsection{Blocking Force}

We evaluate blocking force by sweeping over a range of pressure inputs for each limb.
To compare limbs, we shift the data so that each limb starts exerting force at the same normalized pressure $\Delta u = 0$.
We define $\Delta u = u - \bar{u}_w$ where $\bar{u}_w$ is a reference pressure such that the $F_w(\bar{u}_w) \geq \epsilon$. 
We choose $\epsilon = 9.81e^{-05}$ N, the uncertainty due to the force sensor. 
The offsets $\bar{u}_{40}$ and $\bar{u}_{45}$ are directly calculated from data, whereas the limb $w_{30}$ exerts $F_{30}(u)=0 \;\forall u \in [0, 290]$ hPa (See Fig. \ref{fig:force_setup}a), so we arbitrarily set $\bar{u}_{30} = \bar{u}_{40}$.

If our hypothesis is true that $w^*=w_{45}$, it should be that $F_{45}(\Delta u) > F(\Delta u)$ for the other designs.
To evaluate if this holds, we calculate the pressure at which a consistent relationship holds between the prototypes: $\Delta u_{crit}$ is $\argmax_{\Delta u} \Delta u$ such that $F_{45}(\Delta u) \leq F_{40}(\Delta u)$. 
In our case, $\Delta u_{crit} = 20.22$ hPa.
The blocking force measurements at different pressures in Fig. \ref{fig:force_result} show that, in the range $\Delta u>\Delta u_{crit}$, $F_{45}(\Delta u) > F_{40}(\Delta u) > F_{30}(\Delta u)$.

\begin{figure}[h]
    \centering
    \includegraphics[width=1\linewidth]{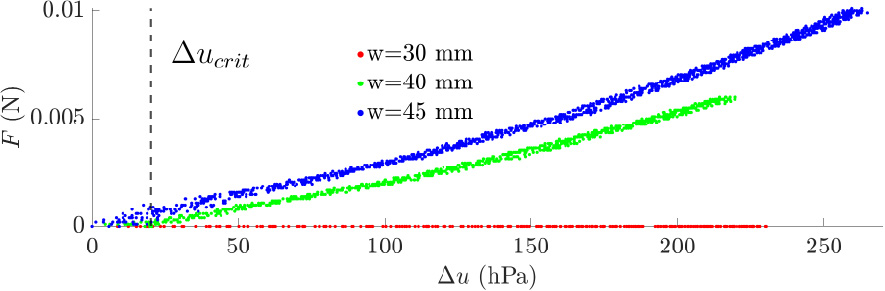}
    \caption{The data demonstrates force optimality of $w_{45}$ limb since all $F_{45}(\Delta u) > F_{40}(\Delta u) > F_{30}(\Delta u)$ for all $\Delta u > \Delta u_{crit}$.}
    \label{fig:force_result}
\end{figure}

\section{Discussion}
\label{sec:discuss}

Our hardware results confirm that Assumptions \ref{assm:sym_periodic} and \ref{assm:uniform_beam} apply to our limb designs, so Eq. \eqref{eq:approx_greenhill} can be used to predict $L_{max}$ for each of our limbs, and that the force-optimal design is $w^*=w_{45}$ as calculated.

\subsection{Results Interpretation}

Both the results from the buckling test and the blocking force test show agreement of Assumption \ref{assm:uniform_beam} and Theorem \ref{thm:active}, respectively. 
Errors in $L_{max}$ of at most 6\% are small in comparison to manufacturing differences and other unaccounted phenomena, so we conclude that eqn. \eqref{eq:approx_greenhill} can predict maximum stable robot height for a first-pass design.

Based on the blocking force data shown in Fig. \ref{fig:force_result}, $w_{45}$ produces the highest force after an initial pressure difference of approximately $\Delta u_{crit} \approxeq 20$ hPa.
Since the relationship is visually distinct for $\Delta u > \Delta u_{crit}$, we attribute this initial low-pressure behavior to noise in our test setup (vibrations or pressure regulation), and conclude that the result is valid particularly at higher pressures.
So, we conclude that the limb $w_{45}$ exerts the most force and that our methodology leads to an optimal design of this soft limb.

\subsection{Sources of Error}

Discrepancies exist between predicted versus observed buckling test $L_{max}$ for different limbs.
In particular, the errors for $w_{45}$ are larger than the other two prototypes.
Errors may be caused by manufacturing defects in $w_{45}$, for example, the top of PneuNet chamber walls may be connected to the sealing silicone during the second molding process in Fig. \ref{fig:manufact}(b). 
With the walls closer to the center of the cross-section, the average $I$ is smaller, and in turn, the experimental $L_{max}$ is smaller. 

In the blocking force hardware testing, each limb exhibits vibration while being inflated, which causes the limb to touch the force plate at varying forces at the same input pressure. 
The effect can be observed in Fig. \ref{fig:force_result} through the deviation along the $F$ axis at each $\Delta u$. 

\subsection{Limitations}

Limitations arise from the problem setup and its assumptions.
The problem setup assumes that there exists an analytical expression for $F(\boldtheta)$, which does not apply to all architectures of soft limbs. The blocking force expression for our limb is also assumed to follow the trend of the 1D PneuNet actuator by \cite{zhu2021modeling}. 
In a real-world application, meaningful interaction of a soft limb with the environment also does not merely occur at the blocking position \cite{cianchetti2018biomedical}, so it remains to be determined whether blocking force is truly a surrogate for force over the full robot workspace.

Additionally, our assumptions of known design requirements for proof-of-concept may not apply in all settings, and may not fully capture all goals.
Though the requirement of $L=\bar{L}$ physically represents the maximum workspace of the limb, it does not account for workspace under input constraints: an alternative problem formulation may choose to maximize some deflection at the same time as applied force.
An accurate analytical deflection model for this specific architecture of soft limb is out of scope, though future work could consider modified Euler-Bernoulli or Timoshenko bending \cite{olson_eulerbernoulli_2020,olson_generalizable_2021}.
Other design variables, such as $h$, $t$, and $\ell$, can also be included in the optimization problem, which would turn this into a multivariable optimization. 
We choose to optimize only $w$ and fix other variables for manufacturability and simplicity of experiment.

\section{Conclusion And Future Work}

This work presents a method to determine the geometric dimensions of a soft robot limb so that it applies the highest end effector forces under gravitational loading.
Choosing the highest-force geometry of a limb may allow soft robot designers to determine if their concept is appropriate for human-scale robotic interaction.
The approach analytically solves an optimization problem, relying on the monotonicity of the robot's interaction force $F$ and convex bounds from manufacturability. 
By using the maximum length of Euler-Bernoulli's beam \cite{greenhill} and Assumption \ref{assm:uniform_beam}, we are able to ensure the limb does not buckle under gravity while maximizing $F$. 
To our knowledge, this is the first time that a force-optimal geometry of a large soft robot arm can be calculated analytically, which answers a significant question in soft robotics: can soft robot manipulators be both large and strong? 
Our method provides an answer for any given class of designs.

Our methodology is limited by the requirement of a relationship between $F$ and $\boldtheta$, which may not be available for certain classes of soft limbs. We also use blocking force $F$ as an objective function, which constrains our design consideration to interactions at the tip at $\delta = 0$. 
For future work, the tip force could be modeled at $\delta > 0$ to optimize the overall force interaction with the environment. 
If the new problem setup breaks Assumptions \ref{assm:closed_bounded} and \ref{assm:f_increase}, we can explore a different optimization approach.
Additionally, future work can optimize multiple variables to reach a more optimal solution, or objective functions beyond force.
Finally, future work can assess the generalizability of this approach to other geometries of soft robot, as well as address the underlying question of practicality for human interaction.

\bibliographystyle{ieeetr}
\bibliography{references}
\end{document}